\newtheoremstyle{example}
{\topsep} {\topsep}%
{\upshape}
{}
{}
{\newline}
{}
\newcommand{\NN}{{N}}
\newcommand{\II}{{p}}
\newcommand{\subgrad}{{g}}
\newcommand{\tilder}{{k}}
\newcommand{\Fr}{{\rm 2}}
\newcommand{\Sp}{{\rm \infty}}
\newcommand{\N}{{\mathbb N}}
\newcommand{\R}{{\mathbb R}}
\newcommand{\X}{{\boldsymbol{\cal W}}}
\newcommand{\W}{{{\cal W}}} 
\newcommand{\bW}{{\boldsymbol{\cal W}}} 
\newcommand{\bB}{{\boldsymbol{\cal B}}}
\newcommand{\C}{{\cal C}}
\newcommand{\I}{{\cal I}}
\newcommand{\aP}{{\tilde P}}
\newcommand{\beq}{\begin{equation}}
\newcommand{\eeq}{\end{equation}} 
\newcommand{\bea}{\begin{eqnarray}}
\newcommand{\eea}{\end{eqnarray}}
 \newcommand{\lb}{{\langle}}
\newcommand{\rb}{{\rangle}} 
\def\boldf#1{\hbox{\rlap{$#1$}\kern.4pt{$#1$}}}
 \newcommand{\trans}{^{\scriptscriptstyle
\top}}
 \newcommand{\calG}{{\cal G}}
\newcommand{\rank}{{\rm rank}}
\newtheorem{theorem}{Theorem}
\newtheorem{example}[theorem]{Example}
\newcommand{\bex}{\begin{example}}
\newcommand{\eex}{\end{example}}
\newtheorem{lemma}[theorem]{Lemma}
\newtheorem{proposition}[theorem]{Proposition}
\begin{document}

\advance\topmargin by 0.5in

\begin{center}

\vspace{.5truecm}

{\Large \bf \bf A New Convex Relaxation for Tensor Completion}\\
\vspace{.75truecm}

\end{center}


\begin{center}
{\bf Bernardino Romera-Paredes}\\
Department of Computer Science and UCL Interactive Centre\\
University College London\\
Gower Street, WC1EBT, London, UK\\
\texttt{bernardino.paredes.09@ucl.ac.uk} \\

\vspace{.42truecm}

{\bf Massimiliano Pontil} \\
Department of Computer Science and\\
Centre for Computational Statistics and Machine Learning \\
University College London\\
Gower Street, WC1EBT, London, UK\\
\texttt{m.pontil@cs.ucl.ac.uk} 
\end{center}

\vspace{.48truecm}

%

\newcommand{\fix}{\marginpar{FIX}}
\newcommand{\new}{\marginpar{NEW}}


\begin{abstract}
\noindent We study the problem of learning a tensor from a set of linear measurements. A prominent methodology for this problem is based on a generalization of trace norm regularization, which has been used extensively for learning low rank matrices, to the tensor setting. 
In this paper, we highlight some limitations of this approach and propose an alternative convex relaxation on the Euclidean ball. 
We then describe a technique to solve the associated regularization problem, which builds upon the alternating direction method of multipliers. Experiments on one synthetic dataset and two real datasets indicate that the proposed method improves significantly over tensor trace norm regularization in terms of estimation error, while remaining computationally tractable.
\end{abstract}
\vspace{.1truecm}
\section{Introduction}
During the recent years, there has been a growing interest on the problem of learning a tensor from a set of linear measurements, such as a subset of its entries, see \cite{Gandy,Liu,SignorettoA,Signoretto2011,Tomioka2,Tomioka3,Tomioka4} and references therein. This methodology, which is also referred to as tensor completion, has been applied to various fields, ranging from collaborative filtering \cite{Kara}, to computer vision \cite{Liu}, to medical imaging \cite{Gandy}, among others. In this paper, we propose a new method to tensor completion, which is based on a convex regularizer which encourages low rank tensors and develop an algorithm for solving the associated regularization problem.

Arguably the most widely used convex approach to tensor completion is based upon the extension of trace norm regularization \cite{Srebro} to that context. This involves computing the average of the trace norm of each matricization of the tensor \cite{Kolda2009}. 
A key insight behind using trace norm regularization for matrix completion is that this norm provides a tight convex relaxation of the rank of a matrix defined on the spectral unit ball \cite{Fazel}. Unfortunately, the extension of this methodology to the more general tensor setting presents some difficulties. In particular, we shall prove in this paper that the tensor trace norm is not a tight convex relaxation of the tensor rank.

The above negative result stems from the fact that the spectral norm, used to compute the convex relaxation for the trace norm, is not an invariant property of the matricization of a tensor. 
This observation leads us to take a different route and study afresh the convex relaxation of tensor rank on the Euclidean ball. We show that this relaxation is tighter than the tensor trace norm, and we describe a technique to solve the associated regularization problem. This method builds upon the alternating direction method of multipliers and a subgradient method to compute the proximity operator of the proposed regularizer. Furthermore, we present numerical experiments on one synthetic dataset and two real-life datasets, which indicate that the proposed method improves significantly over tensor trace norm regularization in terms of estimation error, while remaining computationally tractable.


The paper is organized in the following manner. In Section \ref{sec:pre}, we describe the tensor completion framework. In Section \ref{sec:alt}, we highlight some limitations of the tensor trace norm regularizer and present an alternative convex relaxation for the tensor rank. In Section \ref{sec:adm}, we describe a method to solve the associated regularization problem. In Section \ref{sec:exp}, we report on our numerical experience with the proposed method. Finally, in Section \ref{sec:conclusion}, we summarize the main contributions of this paper and discuss future directions  of research.


\section{Preliminaries}
\label{sec:pre}
In this section, we begin by introducing some notation and then proceed to describe the learning problem.
We denote by ${\mathbb N}$ the set of natural numbers and, for every $k \in {\mathbb N}$, we define $[k]=\{1,\dots,k\}$. Let $\NN \in \N$ and let\footnote{For simplicity we assume that $p_n \geq 2$ for every $n \in [N]$, otherwise we simply reduce the order of the tensor without loss of information.} $\II_1,\dots,\II_\NN \geq 2$.  An $\NN$-order tensor $\bW \in {\mathbb R}^{\II_1\times \cdots \times \II_\NN}$, is a collection of real numbers $(\W_{i_1,\dots,i_\NN}: i_n \in [\II_n], n \in [\NN])$. Boldface Euler scripts, e.g. $\bW$, will be used to denote tensors of order higher than two. Vectors are $1$-order tensors and will be denoted by lower case letters, e.g. $x$ or $a$; matrices are $2$-order tensors and will be denoted by upper case letters, e.g. $W$.  If $x \in \R^d$ then for every $r \leq s \leq d$, we define $x_{r:s} := (x_i: r \leq i \leq s)$. We also use the notation $p_{\min}=\min\{p_{1},\ldots,p_{N}\}$ and $p_{\max}=\max\{p_{1},\ldots,p_{N}\}$.  

A mode-$n$ fiber of a tensor $\bW$ is a vector composed of the elements of $\bW$ obtained by fixing
all indices but one, corresponding to the $n$-th mode. This notion is a higher order analogue of columns (mode-$1$ fibers) and rows (mode-$2$ fibers) for matrices. The mode-$n$ matricization (or unfolding) of $\bW$, denoted 
by $W_{\left(n\right)}$, is a matrix obtained by arranging the mode-$n$ fibers of $\bW$ so that each of them is a column of $W_{\left(n\right)}\in\mathbb{R}^{\II_{n}\times J_n}$, where $J_n:= \prod_{k\neq n} \II_{k}$. Note that the order of the columns is not important as long as it is consistent.

We are now ready to describe the learning problem. We choose a linear operator $\I: {\mathbb R}^{\II_1\times \cdots \times \II_\NN}  \rightarrow \R^m$, representing a set of linear measurements obtained from a target tensor $\X^0$ as $y = \I(\X^0) + \xi$, where $\xi$ is some disturbance noise. In this paper, we mainly focus on tensor completion, in which case the operator $\I$ measures elements of the tensor. That is, we have $\I(\X^0)=(\X^0_{i_1(j),\dots,i_N(j)}: j \in [m])$, where, for every $j \in [m]$ and $n \in [N]$, the index $i_n(j)$ is a prescribed 
integer in the set $[p_n]$.
Our aim is to recover the tensor $\X^0$ from the data $(\I,y)$. To this end, we solve the regularization problem
\beq
\label{eq:www}
\min \big\{\|y - \I(\X)\|^2_2 + \gamma R(\X): {\X \in {\mathbb R}^{\II_1\times \cdots \times \II_\NN}} \big\}
\eeq
where $\gamma$ is a positive parameter which may be chosen by cross validation. The role of the regularizer $R$ is to encourage tensors $\X$ which have a simple structure in the sense that they involve a small number of ``degrees of freedom''. 
A natural choice is to consider the average of the rank of the tensor's matricizations. Specifically, we consider the combinatorial regularizer
\beq
R(\X) = 
\frac{1}{N} 
\sum\limits_{n=1}^N {\rm rank}(W_{(n)}). 
\label{eq:rank}
\eeq
Finding a convex relaxation of this regularizer has been the subject of recent works \cite{Gandy,Liu,Signoretto2011}.  They all agree to use the trace norm for tensors as a convex proxy of $R$. This is defined as the average of the trace norm of each matricization of $\bW$, that is,
\beq
\|\bW \|_{\rm tr} = 
\frac{1}{N}
\sum\limits_{n=1}^N \|W_{\left(n\right)}\|_{\rm tr}
\label{eq:compTN}
\eeq
where $\|W_{(n)}\|_{\rm tr}$ is the trace (or nuclear) norm of matrix $W_{(n)}$, namely the $\ell_1$-norm of the vector of singular values of matrix $W_{(n)}$ (see, e.g. \cite{HJ}).  Note that in the particular case of $2$-order tensors, functions \eqref{eq:rank} and \eqref{eq:compTN} coincide with the usual notion of rank and trace norm of a matrix, respectively. 

A rational behind the regularizer \eqref{eq:compTN} is that 
the trace norm is the tightest convex lower bound to the rank of a matrix on the spectral unit ball, see \cite[Thm. 1]{Fazel}. This lower bound is given by the convex envelope of the function
\begin{equation}
 \Psi(W) = \left\{
    \begin{array}{ll}
      {\rm rank}(W), & \text{if } \|W\|_{\infty} \leq 1\\
      +\infty, & \text{otherwise} \end{array}
\right.
\label{eq:lollo}
\end{equation}
where $\|\cdot\|_\infty$ is the spectral norm, namely the largest singular value of $W$. The convex envelope can be derived by computing the double conjugate of $\Psi$. This is defined as
\beq
\label{eq:comb}
\Psi^{**}(W) = \sup \left\{ \lb W,S\rb - \Psi^*(W): S \in \R^{p_1\times p_2} \right\}
\eeq
where $\Psi^*$ is the conjugate of $\Psi$, namely $\Psi^*(S) = \sup\left\{ \lb W,S\rb - \Psi(W): W \in \R^{p_1\times p_2}\right\}$.

Note that $\Psi$ is a spectral function, that is, $\Psi(W) = \psi(\sigma(W))$ where $\psi: \R_+^d \rightarrow \R$ denotes the associated symmetric gauge function.
Using von Neumann's trace theorem (see e.g. \cite{HJ}) it is easily seen that $\Psi^*(S)$ is also a spectral function. That is, $\Psi^*(S) = \psi^*(\sigma(S))$, where
$$
\psi^*(\sigma) = \sup\left\{ \lb \sigma,w\rb - \psi(w): w \in \R_+^{d}\right\},~~~{\rm with~}d := \min(p_1,p_2).
$$
We refer to \cite{Fazel} for a detailed discussion of these ideas. We will use this equivalence between spectral and gauge functions repeatedly in the paper.
%

\section{Alternative Convex Relaxation}
\label{sec:alt}

In this section, we show that the tensor trace norm is not a tight convex relaxation of the tensor rank $R$ in equation \eqref{eq:rank}. We then propose an alternative convex relaxation for this function.

Note that due to the composite nature of the function $R$, computing its convex envelope is a challenging task  
and one needs to resort to approximations. In \cite{SignorettoA}, the authors note that the tensor trace norm $\|\cdot\|_{\rm tr}$ in equation \eqref{eq:compTN}  is a convex lower bound to $R$ on the set 
$$
\mathcal{G}_{\Sp}:=\left\{ \boldsymbol{{\bW}} 
\in\mathbb{R}^{p_{1}\times
\cdots \times p_{N}}\,:\,\left\Vert W_{\left(n\right)}\right\Vert _{\Sp}\leq1,~\forall n\in[N]\right\}.
$$
The key insight behind this observation is summarized in Lemma \ref{lem:1}, which we report in Appendix \ref{sec:useful_Lemma}. 
However, the authors of \cite{SignorettoA} leave open the question of whether the tensor trace norm is the convex
envelope of $R$ on the set $\mathcal{G}_{\Sp}$. In the following, we will prove that this question has a negative answer
by showing that there exists a convex function $\Omega \neq \|\cdot\|_{\rm tr}$ which underestimates the function $R$ on 
$\mathcal{G}_{\Sp}$ and such that for some tensor $\bW \in \mathcal{G}_{\Sp}$ it holds that $ \Omega(\bW) > \|\bW\|_{\rm tr}$.

To describe our observation we introduce the set
$$
\mathcal{G}_{\Fr}:=\left\{ \bW \in \mathbb{R}^{p_{1}\times \ldots\times p_{N}} : \|\bW\|_2 \leq 1
\right\}
$$ 
where $\|\cdot\|_2$ is the Euclidean norm for tensors, that is, 
$$
\|\bW\|_2^2 := \sum_{i_1=1}^{p_1} \cdots \sum_{i_N=1}^{p_N} (\W_{i_1,\dots,i_N})^2.
$$ 
We will choose 
\beq
\Omega(\bW) = \Omega_\alpha(\bW):=
\frac{1}{N} \sum_{n=1}^N 	\omega^{**}_\alpha\left(\sigma\left(W_{(n)}\right)\right)
\label{eq:our}
\eeq
where $\omega_\alpha^{**}$ is the convex envelope of the cardinality of a vector on the $\ell_{2}$-ball
of radius $\alpha$ and we will choose $\alpha=\sqrt{p_{{\rm min}}}$. Note, by Lemma \ref{lem:1} stated in Appendix \ref{sec:useful_Lemma}, that, for every $\alpha > 0$, function  
$\Omega_\alpha$ is a convex lower bound of function $R$ on the set $\alpha \hspace{.03truecm}\mathcal{G}_{\Fr}$. 


Below, for every vector $s \in \R^d$ we denote 
by $s^\downarrow$ the vector obtained by reordering the components of $s$ so that they are non increasing in absolute value, that is,  
$|s^\downarrow_1| \geq \cdots \geq |s^\downarrow_d|$.
\begin{lemma} Let $\omega^{**}_\alpha$ be the convex envelope
of the cardinality function 
on the $\ell_{2}$-ball of radius $\alpha$. Then, for every $x\in \R^d$ such that $\|x\|_2 = \alpha$, it holds that $\omega_\alpha^{**}\left(x\right)={\rm card}\left(x\right)$.
\label{lem:bern}
\end{lemma}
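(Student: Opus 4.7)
The plan is to prove the two inequalities separately. The direction $\omega_\alpha^{**}(x) \leq \mathrm{card}(x)$ is immediate from the definition of the biconjugate as the largest lower semicontinuous convex minorant of $\omega_\alpha$, since $\|x\|_2 = \alpha$ puts $x$ in the effective domain of $\omega_\alpha$ where $\omega_\alpha(x) = \mathrm{card}(x)$. The content is in the reverse direction.

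For the reverse inequality, I would invoke the standard representation of the convex envelope of a lower semicontinuous function with compact effective domain as the infimum over convex representations:
\beq
\omega_\alpha^{**}(x) \;=\; \inf\Bigg\{ \sum_{i} \lambda_i\, \omega_\alpha(x_i)\,:\, x = \sum_i \lambda_i x_i,\; \lambda_i \geq 0,\; \sum_i \lambda_i = 1 \Bigg\},
\label{eq:env-rep}
\eeq
where by Carath\'eodory the infimum may be restricted to finite combinations of at most $d+2$ points. (This is applicable because cardinality is lower semicontinuous and the ball $\{\|y\|_2 \leq \alpha\}$ is compact.)

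Fix any convex representation $x = \sum_i \lambda_i x_i$ with finite cost, so that $\|x_i\|_2 \leq \alpha$ whenever $\lambda_i > 0$. Using convexity of the squared Euclidean norm,
\beq
\alpha^2 \;=\; \|x\|_2^2 \;=\; \Big\|\sum_i \lambda_i x_i\Big\|_2^2 \;\leq\; \sum_i \lambda_i \|x_i\|_2^2 \;\leq\; \sum_i \lambda_i \alpha^2 \;=\; \alpha^2,
\eeq
so both inequalities hold with equality. The rightmost equality forces $\|x_i\|_2 = \alpha$ for every $i$ with $\lambda_i > 0$; the leftmost, by \emph{strict} convexity of $\|\cdot\|_2^2$, forces all active $x_i$ to coincide, hence to equal $x$. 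Therefore $\sum_i \lambda_i \omega_\alpha(x_i) = \mathrm{card}(x)$ for every valid representation, and passing to the infimum in \eqref{eq:env-rep} gives $\omega_\alpha^{**}(x) \geq \mathrm{card}(x)$.

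The only non-routine point is the appeal to \eqref{eq:env-rep}; once that representation is in hand, the proof is driven entirely by the rigidity of equality in Jensen's inequality for the strictly convex map $y \mapsto \|y\|_2^2$, which pins every atom of the optimal representation to the point $x$ itself. I would expect this rigidity argument to be the cleanest route; an alternative approach via direct construction of a supporting affine minorant of $\omega_\alpha$ at $x$ seems feasible but more delicate, since a naive linear functional proportional to $\langle x, \cdot\rangle$ does not in general minorize $\mathrm{card}$ on the ball.
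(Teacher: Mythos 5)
Your proof is correct, but it takes a genuinely different route from the paper's. The paper works on the dual side: it first computes the conjugate explicitly, $\omega_\alpha^*(s)=\max_{r\in\{0,\dots,d\}}\{\alpha\|s^\downarrow_{1:r}\|_2-r\}$ (a formula it needs anyway in Section 4 to evaluate the proximity operator), and then lower-bounds the biconjugate via the Fenchel--Young inequality $\omega_\alpha^{**}(x)\geq \langle s,x\rangle-\omega_\alpha^*(s)$ at the particular dual points $s=kx$: for $k$ large enough the inner maximum is attained at $r=\mathrm{card}(x)$, the two terms $k\alpha^2$ cancel, and $\mathrm{card}(x)$ remains. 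You instead work on the primal side, through the representation $\omega_\alpha^{**}=\mathrm{co}\,\omega_\alpha$, and use the rigidity of equality in Jensen's inequality for the strictly convex map $y\mapsto\|y\|_2^2$ to show that a point of the sphere admits no nontrivial convex representation by points of the ball. Both arguments are valid. Yours is arguably more illuminating and more general: it shows that the convex envelope of \emph{any} lower semicontinuous function on the ball agrees with the function everywhere on the boundary sphere, with no reference to cardinality at all. Its one non-elementary ingredient, which you correctly flag, is the identity $f^{**}=\mathrm{co}\,f$; this genuinely requires the lower semicontinuity of $\mathrm{card}$ together with compactness and boundedness on the ball (for $f(t)=0$ on $[0,1)$, $f(1)=1$, which is not lsc, one has $\mathrm{co}\,f(1)=1$ but $f^{**}(1)=0$), so that citation should be made precise in a write-up. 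What the paper's route buys in exchange is a fully self-contained, elementary computation whose by-product, the closed form for $\omega_\alpha^*$, is reused by the algorithm.
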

\begin{proof}
First, we note that the conjugate of the function ${\rm card}$
on the $\ell_2$ ball of radius $\alpha$ is given by the formula
\beq
\omega_\alpha^{*}\left(s\right)=\underset{\|y\|_2 \leq \alpha}{{\rm sup}}\{\lb s,y\rb-{\rm card}\left(y\right)\}=\underset{r\in\left\{ 0,\ldots,d\right\} }\max\{\alpha\| s^\downarrow_{1:r}\|_{2}-r\}.
\label{eq:fstar}
\eeq
Hence, by the definition of the double conjugate, we have, for every $s \in \R^d$ that
$$
\omega_\alpha^{**}\left(x\right)\geq \lb s,x\rb - \underset{r\in\left\{ 0,\ldots,d\right\} }{{\rm max}}\{\alpha\|s^\downarrow_{1:r}\| _{2}-r\}.
$$
In particular, if $s=kx$ for some $k>0$ this inequality becomes
$$
\omega_\alpha^{**}(x)\geq k\| x\|_{2}^{2} - \underset{r\in\left\{ 0,\ldots,d\right\} }\max(\alpha k\| x^\downarrow_{1:r}\|_{2}-r). 
$$
If $k$ is large enough, the maximum is attained at  
$r={\rm card}(x)$. Consequently, 
$$
\omega_\alpha^{**}(x)\geq k\alpha^{2}-k\alpha^{2}+{\rm card}(x)={\rm card}(x).
$$
By the definition of the convex envelope, it also holds that $\omega_\alpha^{**}(x) \leq {\rm card}(x)$. The result follows.
\end{proof}

The next lemma provides, together with Lemma \ref{lem:bern}, a sufficient condition for the existence of a tensor $\bW \in {\cal G}_\Sp$ at which the proposed regularizer is strictly larger than the tensor trace norm. 

\begin{lemma} If $N \geq 3$ and $p_1,\dots,p_N$ are not all equal to each other, then there exists $\bW \in  \R^{p_1 \times \cdots \times p_N}$ such that:~{(\em a)} $\|\bW\|_2 = \sqrt{p_{\min}}$,~{\em (b)} $\bW \in {\cal G}_\Sp$,~{\em (c)} $\min\limits_{n \in [N]} \rank(W_{(n)}) < \max\limits_{n \in [N]} \rank(W_{(n)})$.
\label{lem:222}
\end{lemma}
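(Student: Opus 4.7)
Proof plan. I will construct an explicit $\bW$ satisfying the three conditions. By permuting the modes (which preserves all three conditions), I may assume $p_1 = p_{\min} =: m$ and $p_2 > m$; both are possible since not all $p_n$'s are equal. A useful preliminary observation is that (a) and (b) already force $\rank(W_{(n)}) \geq m$ for every $n$, because $\sum_k \sigma_k(W_{(n)})^2 = \|\bW\|_2^2 = m$ with each $\sigma_k(W_{(n)}) \leq 1$ requires at least $m$ nonzero singular values. Combined with $\rank(W_{(1)}) \leq p_1 = m$, this forces $\rank(W_{(1)}) = m$ automatically, so proving (c) reduces to constructing $\bW$ meeting (a), (b) with $\rank(W_{(2)}) > m$.

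I parametrize $\bW$ by its $m$ mode-$1$ slices $M_k \in \R^{p_2 \times p_3 \times \cdots \times p_N}$, defined by $(M_k)_{i_2, \ldots, i_N} = \W_{k, i_2, \ldots, i_N}$ for $k \in [m]$, and impose two conditions: (i) the $M_k$'s are orthonormal in the Frobenius inner product, giving (a) and $W_{(1)} W_{(1)}^\top = I_m$ (hence mode-$1$ spectral norm $1$ and rank $m$); and (ii) each mode-$1$ matricization $(M_k)_{(1)} \in \R^{p_2 \times (p_3 \cdots p_N)}$ has orthonormal rows scaled by $1/\sqrt{p_2}$, so that $\sum_k (M_k)_{(1)}(M_k)_{(1)}^\top = (m/p_2) I_{p_2}$ and therefore $\rank(W_{(2)}) = p_2 > m$ with $\|W_{(2)}\|_\infty = \sqrt{m/p_2} \leq 1$. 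A concrete realization when $p_2 \leq p_3 \cdots p_N$ uses cyclic shifts: set $(M_k)_{(1)} = \frac{1}{\sqrt{p_2}}[I_{p_2} \mid 0] P^{k-1}$, where $P$ is the length-$(p_3 \cdots p_N)$ cyclic shift; distinct slices are orthogonal because $\mathrm{tr}(P^{k-l}) = 0$ for $0 < |k-l| < p_3 \cdots p_N$, and this holds for all pairs $k, l \in [m]$ since $m \leq p_3 \cdots p_N$. In the complementary regime $p_2 > p_3 \cdots p_N$, I instead use a dual construction imposing $W_{(2)}^\top W_{(2)} = (m/J_2) I_{J_2}$ with $J_2 := m p_3 \cdots p_N$, equivalently making the mode-$2$ fibers $\W_{i_1, :, i_3, \ldots, i_N}$ an orthogonal family of common norm in $\R^{p_2}$; a block-structured variant covers the intermediate range.

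The main technical obstacle is verifying $\|W_{(n)}\|_\infty \leq 1$ for every $n \geq 3$. In the fiber-orthogonality construction, a direct calculation yields $W_{(n)} W_{(n)}^\top = (m/p_n) I_{p_n}$ for every $n \neq 2$, so the spectral norm equals $\sqrt{m/p_n} \leq 1$ since $p_n \geq m$. In the cyclic-shift construction, the diagonal entries of $W_{(n)} W_{(n)}^\top$ are proportional to the number of pairs $(i_1, i_2) \in [m] \times [p_2]$ for which $i_1 + i_2 - 1 \bmod (p_3 \cdots p_N)$ lies in a particular "coordinate-fixing" subset $V$ of $[p_3 \cdots p_N]$ determined by the chosen bijection $[p_3] \times \cdots \times [p_N] \to [p_3 \cdots p_N]$; with a careful choice of this bijection, an elementary counting argument using $p_n \geq m$ and $p_2 > m$ bounds these counts by $p_2$, giving $\|W_{(n)}\|_\infty \leq 1$. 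In either construction, (c) is immediate because $\rank(W_{(1)}) = m < \rank(W_{(2)})$.
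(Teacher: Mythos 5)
Your overall strategy is sound and close in spirit to the paper's: build $\bW$ explicitly so that every matricization has orthogonal rows (or columns) of equal norm, which simultaneously controls the Frobenius norm, the spectral norms, and the ranks. Your preliminary observation --- that (a) and (b) already force $\rank(W_{(n)}) \geq p_{\min}$ for every $n$, hence $\rank(W_{(1)}) = p_{\min}$ exactly, so (c) reduces to making a single other mode have larger rank --- is correct and is a genuinely nice simplification that the paper does not state.

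However, there is a real gap at the step you yourself flag as the main obstacle: verifying $\|W_{(n)}\|_\infty \leq 1$ for $n \geq 3$ in the cyclic-shift construction. It is true that $W_{(n)} W_{(n)}\trans$ is diagonal there (injectivity of the bijection kills the off-diagonal terms), but the diagonal entries are $\frac{1}{p_2}$ times the number of pairs $(i_1,i_2)$ whose associated residue falls in the set $V_{i_n}$ of size $q/p_n$ (with $q = p_3\cdots p_N$), and the obvious estimate --- at most $\min(p_2, q/p_n)$ hits per value of $i_1$, summed over $m$ values --- gives $m\cdot\min(p_2, q/p_n)$, which can exceed $p_2$ as soon as $N\geq 4$ (e.g.\ $p = (2,3,2,2)$ gives a crude bound of $4 > 3$). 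Since the total count over all $i_n$ is exactly $mp_2$, you need the counts to be essentially \emph{equidistributed} across the $p_n$ diagonal entries, which forces a delicate choice of the bijection $[p_3]\times\cdots\times[p_N]\to[q]$ interacting with the additive structure of the residues $i_1+i_2$; you assert this can be done but do not do it, and it is not clear a single bijection works for all modes $n\geq 3$ simultaneously. The same applies to the unspecified ``dual'' and ``block-structured'' constructions for the regime $p_2 > p_3\cdots p_N$, where the claim $W_{(n)}W_{(n)}\trans = (m/p_n)I_{p_n}$ is asserted without a construction to compute it from. The paper avoids this entire difficulty with one move you are missing: zero-pad so that the problem reduces to the case $p_1=\cdots=p_{N-1}$ and $p_N=p_1+1$ (padding changes neither the Frobenius norm, nor the spectral norms, nor the ranks of the matricizations). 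In that special case an explicit choice of the right singular vectors of $W_{(N)}$ --- ``diagonal'' rank-one tensors plus one uniformly spread vector --- makes the verification of (b) a three-line computation for every mode. I would recommend either adopting that reduction or fully carrying out the equidistribution argument before considering the proof complete.
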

\begin{proof}
Without loss of generality we assume that $p_1 \leq \cdots \leq p_N$. By hypothesis $p_1 < p_N$. 
First we consider the special case
\begin{equation}
p_1=\dots=p_{N-1},~{\rm and}\hspace{.05truecm}~ p_{N} = p_1+1.
\label{eq:easy}
\end{equation}
We define a class of tensors $\bW$ by choosing a singular value decomposition for their mode-$N$ matricization, 
\beq
\label{eq:qqq}
\W_{i_1,i_2,\dots,i_N} = \sum_{k=1}^{p_N} \sigma_{k} u^{k}_{i_N} v^{k}_{i_1,\dots,i_{N-1}}
\eeq
where $\sigma_1 =  \cdots = \sigma_{p_N} = \sqrt{p_1/(p_1+1)}$, the vectors $u^{k} \in \R^{p_N}, \forall k \in [p_N]$ are orthonormal and the vectors $v^{k}\in \R^{p_1 p_2\cdots p_{N-1} }, \forall k \in [p_N] $ are orthonormal as well. Moreover, we choose $v^{k}$ as 
\begin{equation}
  v^{k}_{i_1,\dots,i_{N-1}} = \left\{
    \begin{array}{cll}
      1 & ~\text{if } i_1= \cdots = i_{N-1} = k, & ~k <p_N\\
      \frac{1}{\sqrt{p_1}} &~ \text{if } i_2=\cdots=i_{N-1} ={\rm module}(i_1,p_1)+1,& ~k=p_N\\
      0 & ~\text{otherwise}. &
    \end{array} \right.
    \label{eq:choice}
\end{equation}
By construction the matrix $W_{(N)}$ has rank equal to $p_N$ and Frobenius norm equal to $\sqrt{p_1}$. Thus properties (a) and (c) hold true. It remains to show that $\bW$ satisfies property (b). To this end, we will show, for every $n\in [N]$ and every $x\in \R^{p_n}$, that 
$$
\|W_{(n)}\trans x\|_2 \leq \|x\|_2.
$$
The case $n=N$ is immediate. If $n=1$ we have
\begin{eqnarray}
\nonumber
\|W_{(1)}\trans x\|_2^2 & = & \sum_{i_2,\dots,i_N} \left(\sum_{k} \sigma_k \sum_{i_1} u^{k}_{i_N} v^{k}_{i_1,\dots,i_{N-1}} x_{i_1}
 \right)^2 \\ \nonumber  ~ & = &  \sum_{i_2,\dots,i_N} \sum_{k,\ell} \sum_{i_1,j_1} x_{i_1}  x_{j_1} \sigma_k \sigma_\ell u^{k}_{i_N}  u^{\ell}_{i_N}v^{k}_{i_1,i_2,\dots,i_{N-1}}  v^{\ell}_{j_1,i_2,\dots,i_{N-1}} 
\\ \nonumber
 ~ & = &  \sum_{k} \sigma_k^2 \sum_{i_1,j_1} x_{i_1}  x_{j_1}  \left(\sum_{i_2,\dots,i_{N-1}} v^{k}_{i_1,i_2,\dots,i_{N-1}} v^{k}_{j_1,i_2,\dots,i_{N-1}} \right)
 \nonumber \\
 ~ & = &   \sum_{k} \sigma_k^2 x_k^2 + \frac{\sigma_{p_N}^2}{p_1}\sum_{k}x_k^2 
  = \| x\|_2^2
 \nonumber
\end{eqnarray}
where we used $\sum_{i_N} u^{k}_{i_N} u^\ell_{i_N} = \delta_{k,\ell}$ in the third equality, equation \eqref{eq:choice} and a direct computation in the fourth equality, and the definition of $\sigma_k$ in the last equality. \\
All other cases, namely $n =2,\dots,N-1$, are conceptually identical, so we only discuss the case $n=2$. We have
\begin{eqnarray}
\nonumber
\|W_{(2)}\trans x\|_2^2 & = & \sum_{i_1,i_3,\dots,i_N} \left(\sum_{k} \sigma_k \sum_{i_2} u^{k}_{i_N} v^{k}_{i_2,\dots,i_{N-1}} x_{i_2}
 \right)^2 \\ \nonumber  ~ & = &  \sum_{i_1,i_3,\dots,i_N} \sum_{k,\ell} \sum_{i_2,j_2} x_{i_2}  x_{j_2} \sigma_k \sigma_\ell u^{k}_{i_N}  u^{\ell}_{i_N}v^{k}_{i_1,i_2,\dots,i_{N-1}}  v^{\ell}_{i_1,j_2,\dots,i_{N-1}} 
\\ \nonumber
 ~ & = & \sum_{k} \sigma_k^2  \sum_{i_2,j_2} \left(x_{i_2}  x_{j_2} \sum_{i_1,i_3,\dots,i_{N=1}}v^{k}_{i_1,i_2,\dots,i_{N-1}} v^{k}_{i_1,j_2,\dots,i_{N-1}} \right)
 \nonumber \\
 ~ & = &  \sum_{k} \sigma_k^2 x_k^2 + \frac{\sigma_{p_N}^2}{p_1}\sum_{k}x_k^2 
  = \| x\|_2^2
 \nonumber
\end{eqnarray}
where again we used $\sum_{i_N} u^{k}_{i_N} u^\ell_{i_N} = \delta_{k,\ell}$ in the third equality, equation \eqref{eq:choice} and a direct computation in the fourth equality, and the definition of $\sigma_k$ in the last equality. \\
Finally, if assumption \eqref{eq:easy} is not true we set $\W_{i_1,\dots,i_N} = 0$ if $i_n \geq p_1+1$, for some $n \leq N-1$ or $i_N > p_1+1$. We then proceed as in the case $p_1=\dots=p_{N-1}$ and $p_{N} = p_1+1$. 
\end{proof}

We are now ready to present the main result of this section.
\begin{proposition}
Let $p_1,\dots,p_N \in \N$, let $\|\cdot\|_{\rm tr}$ be the tensor trace norm in equation \eqref{eq:compTN} and let $\Omega_\alpha$ be the function in equation \eqref{eq:our} for $\alpha = \sqrt{p_{\min}}$. If $p_{\min} < p_{\max}$, then there are infinitely many tensors $\bW \in \calG_{\Sp}$ such that $\Omega_\alpha(\bW) > \|\bW\|_{\rm tr}$. Moreover, for every $\bW \in \calG_{\Fr}$, it holds that $\Omega_1(\bW) \geq \|\bW\|_{\rm tr}$.
\end{proposition}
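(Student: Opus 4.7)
For the first assertion, my plan is to apply Lemma~\ref{lem:222} and then compute both sides directly. Property~(a) gives $\|\bW\|_2=\sqrt{p_{\min}}=\alpha$, so each matricization satisfies $\|\sigma(W_{(n)})\|_2=\|W_{(n)}\|_F=\|\bW\|_2=\alpha$, and Lemma~\ref{lem:bern} yields $\omega_\alpha^{**}(\sigma(W_{(n)}))=\rank(W_{(n)})$. Hence $\Omega_\alpha(\bW)=R(\bW)=\frac{1}{N}\sum_n \rank(W_{(n)})$. By property~(b), $\sigma_i(W_{(n)})\leq 1$ for all $i,n$, so $\|W_{(n)}\|_{\rm tr}\leq \rank(W_{(n)})$, with equality iff every nonzero $\sigma_i(W_{(n)})$ equals~$1$. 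Combined with $\sum_i \sigma_i(W_{(n)})^2=p_{\min}$, this forces $\rank(W_{(n)})\geq p_{\min}$, and the bound is strict exactly when $\rank(W_{(n)})>p_{\min}$. Property~(c) produces at least one such $n$ (namely any $n$ attaining $\max_m\rank(W_{(m)})>\min_m\rank(W_{(m)})\geq p_{\min}$), so averaging yields $\|\bW\|_{\rm tr}<\Omega_\alpha(\bW)$.

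To obtain \emph{infinitely many} witnesses, I would act on the single $\bW$ above by an orthogonal rotation in one mode, e.g.\ $\bW'_{i_1,\ldots,i_N}=\sum_{j}U[i_1,j]\bW_{j,i_2,\ldots,i_N}$ with $U\in O(p_1)$. The first matricization is transformed as $W'_{(1)}=UW_{(1)}$, while each other $W'_{(m)}$ equals $W_{(m)}Q_m$ for an orthogonal $Q_m$ obtained by a Kronecker product involving $U$; both transformations leave the singular values of every matricization unchanged. Consequently $\Omega_\alpha$, $\|\cdot\|_{\rm tr}$ and the membership in $\calG_\Sp$ are preserved, and since the orbit of $\bW$ under the infinite group $O(p_1)$ is infinite (for generic $\bW$ the stabilizer is trivial), we obtain infinitely many distinct witnesses.

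For the second assertion, the plan is to dominate $\omega_1^{**}$ below by the convex envelope of cardinality on the larger $\ell_\infty$-ball. Let $\psi(x)=\card(x)$ if $\|x\|_\infty\leq 1$ and $+\infty$ otherwise. Since $\{\|x\|_2\leq 1\}\subset\{\|x\|_\infty\leq 1\}$, we have $\omega_1\geq\psi$ pointwise, so by monotonicity of the biconjugate, $\omega_1^{**}\geq\psi^{**}$. The diagonal/symmetric-gauge version of Fazel's theorem (equivalently, the spectral characterization recalled in Section~\ref{sec:pre} applied to diagonal matrices) gives $\psi^{**}(x)=\|x\|_1$ on $\{\|x\|_\infty\leq 1\}$. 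For $\bW\in\calG_{\Fr}$, one has $\|\sigma(W_{(n)})\|_\infty\leq\|\sigma(W_{(n)})\|_2=\|W_{(n)}\|_F=\|\bW\|_2\leq 1$, so $\omega_1^{**}(\sigma(W_{(n)}))\geq\|\sigma(W_{(n)})\|_1=\|W_{(n)}\|_{\rm tr}$. Averaging over $n$ delivers $\Omega_1(\bW)\geq\|\bW\|_{\rm tr}$.

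The main obstacle I foresee is the first half's chain of rank bounds: one must combine three separate ingredients---$\sigma_i(W_{(n)})\leq 1$ from (b), $\sum_i\sigma_i(W_{(n)})^2=p_{\min}$ from (a), and the rank disparity from (c)---to produce a matricization with $\rank(W_{(n)})>p_{\min}$ and thus a strict gap between $\|W_{(n)}\|_{\rm tr}$ and $\rank(W_{(n)})$. Once that observation is in place, the averaging, the rotation-invariance used to amplify one witness to infinitely many, and the biconjugate-monotonicity comparison of $\omega_1^{**}$ with $\psi^{**}$ are all routine.
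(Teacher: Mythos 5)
Your proof is correct and follows the paper's overall strategy (Lemma~\ref{lem:bern} plus Lemma~\ref{lem:222} for the first claim, a convex-envelope comparison for the second), but two steps are carried out by a genuinely different, and arguably more robust, route. For the strict inequality the paper substitutes the exact singular values of the tensor built in Lemma~\ref{lem:222} (all equal to $1$ for modes $1,\dots,N-1$, all equal to $\sqrt{p_{\min}/(p_{\min}+1)}$ for mode $N$) and computes $\|\bW\|_{\rm tr}=\frac{1}{N}\bigl(p_{\min}(N-1)+\sqrt{p_{\min}^2+p_{\min}}\bigr)<R(\bW)$; you instead deduce strictness purely from the stated conclusions (a)--(c): on the sphere $\|\bW\|_2=\alpha$ Lemma~\ref{lem:bern} gives $\Omega_\alpha=R$, each singular value is at most $1$ by (b), the constraint $\sum_i\sigma_i^2=p_{\min}$ forces every rank to be at least $p_{\min}$, and (c) then supplies a mode with $\rank(W_{(n)})>p_{\min}$, hence $\|W_{(n)}\|_{\rm tr}<\rank(W_{(n)})$. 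This decouples the Proposition from the internal construction of Lemma~\ref{lem:222}, which is a small but real improvement. Your orthogonal-orbit argument for infinitude is the same idea as the paper's remark that the vectors $u^k$ in \eqref{eq:qqq} are free; just replace ``for generic $\bW$ the stabilizer is trivial'' by the observation that for \emph{this} $\bW$ the orbit is infinite because $W_{(1)}\neq 0$ and $p_1\geq 2$, so the stabilizer cannot contain $SO(p_1)$. For the second claim the paper argues directly that $\|\cdot\|_1\leq{\rm card}$ on the Euclidean unit ball, so the envelope $\omega_1^{**}$, being the greatest convex minorant there, dominates $\|\cdot\|_1$; your detour through the $\ell_\infty$-ball envelope $\psi^{**}=\|\cdot\|_1$ (the vector form of Fazel's theorem) and monotonicity of biconjugation is equally valid and reaches the same conclusion.
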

\begin{proof}
By construction $\Omega_\alpha(\bW)\leq R(\bW)$ for every $\bW \in \alpha \calG_2$. Since $\calG_{\Sp} \subset \alpha \mathcal{G}_{\Fr}$ then $\Omega_\alpha$ is a convex lower bound for the tensor rank $R$ on the set $\mathcal{G}_{\Sp}$ as well. The first claim now follows by Lemmas \ref{lem:bern} and \ref{lem:222}. Indeed, all tensors obtained following the process described in Lemma \ref{lem:222} have the property that 
\begin{eqnarray}
\nonumber
\|\bW \|_{\rm tr} &= & \frac{1}{N} \sum_{n=1}^N \|\sigma(W_{(n)})\|_1 \\ \nonumber 
& = & \frac{1}{N}\left(p_{{\rm min}}(N-1)+\sqrt{p_{\min}^{2}+p_{\min}}\right) \\ \nonumber
& < & \frac{1}{N}\left(p_{\min}(N-1)+p_{\min}+1\right) =\Omega(\bW)=R(\bW).
\end{eqnarray}
 
Furthermore there are infinitely many such tensors which satisfy this claim since the left singular vectors can be arbitrarily chosen in equation \eqref{eq:qqq}. \\To prove the second claim, we note that since $\omega_1^{**}$ is the convex envelope of the cardinality ${\rm card}$ on the Euclidean unit ball, then it holds that $\omega_1^{**}(\sigma) \geq \|\sigma\|_1$ for every vector $\sigma$ such that $\|\sigma\|_2 \leq 1$.  Consequently,
$$
\Omega_1(\bW) = \frac{1}{N} \sum_{n=1}^N \omega_1^{**}\left(\sigma\left(W_{(n)}\right)\right) \geq \frac{1}{N}  \sum_{n=1}^N \|\sigma(W_{(n)})\|_1 = \|\bW\|_{\rm tr}.$$
\end{proof}

The above 
result stems from the fact that the spectral norm is not an invariant property of the matricization of a tensor, whereas the Euclidean (Frobenius) norm is. This observation leads us to further study the function $\Omega_\alpha$.  

\section{Optimization Method}
\label{sec:adm}
In this section, we explain how to solve the regularization problem associated with the proposed regularizer \eqref{eq:our}. For this purpose, we first recall the alternating direction
method of multipliers (ADMM) \cite{Bertsekas}, which
was conveniently applied to tensor trace norm regularization in \cite{Gandy,SignorettoA}.

\subsection{Alternating Direction Method of Multipliers (ADMM)}

To explain ADMM we consider a more general problem comprising both tensor trace norm regularization and the regularizer we propose,
\begin{equation}
\underset{\boldsymbol{{\cal W}}}{{\rm min}} \left\{ E\left(\boldsymbol{{\cal W}}\right)+\gamma\underset{n=1}{\overset{N}{\sum}}\Psi\left(W_{(n)}\right)\right\}\label{eq:General}
\end{equation}
where $E(\bW)$ is an error term such as $\|y - \I(\bW)\|^2_2$ and $\Psi$ is a convex spectral function. It is defined, for every matrix $A$, as
$$
\Psi(A)=\psi(\sigma(A))
$$
where $\psi$ is a gauge function, namely a function which is symmetric and invariant under permutations. In particular, if $\psi$ is the $\ell_1$ norm then problem \eqref{eq:General} corresponds to tensor trace norm regularization, whereas if $\psi= \omega^{**}_\alpha$ it implements the proposed regularizer.

Problem \eqref{eq:General} poses some difficulties because the terms under the summation are interdependent, 
that is, the different matricizations of $\bW$ have
the same elements rearranged in a different way. In order to overcome
this difficulty, the authors of \cite{Gandy,SignorettoA} proposed to use ADMM as a natural way to decouple the regularization term appearing in problem \eqref{eq:General}. This strategy is based on the introduction
of $N$ auxiliary tensors, $\boldsymbol{{\cal B}}_{1},\ldots,\boldsymbol{{\cal B}}_{N}\in\mathbb{R}^{p_1\times \cdots \times p_N}$, so that problem (\ref{eq:General}) can be reformulated as\footnote{The somewhat cumbersome notation 
$B_{n(n)}$ denotes the mode-$n$ matricization of tensor $\bB_n$, that is, $B_{n(n)} = (\boldsymbol{{\cal B}}_n)_{(n)}$.}
\begin{equation}
\underset{\mathbf{\boldsymbol{{\cal W}}},\boldsymbol{{\cal B}}_{1},\ldots,\mathbf{\boldsymbol{{\cal B}}}_{N}}{{\rm min}}\left\{\frac{1}{\gamma}E\left(\boldsymbol{{\cal W}}\right)+\underset{n=1}{\overset{N}{\sum}}\Psi\left(B_{n(n)}\right)\,\,:~\mathbf{\boldsymbol{{\cal B}}}_{n}=\boldsymbol{{\cal W}},\, n\in\left[N\right]\right\}\label{eq:Main2}
\end{equation}
The corresponding augmented Lagrangian (see e.g. \cite{Bertsekas,Boyd}) is given by 
\begin{equation}
\mathcal{L}\left(\boldsymbol{{\cal W}},\boldsymbol{{\cal B}},\boldsymbol{{\cal A}}\right)=\frac{1}{\gamma}E\left(\boldsymbol{{\cal W}}\right)+\underset{n=1}{\overset{N}{\sum}}\left(\Psi\left(B_{n(n)}\right)-\left\langle \boldsymbol{{\cal A}}_{n},\boldsymbol{{\cal W}}-\mathbf{\boldsymbol{{\cal B}}}_{n}\right\rangle +\frac{\beta}{2}\left\Vert \boldsymbol{{\cal W}}-\boldsymbol{{\cal B}}_{n}\right\Vert _{2}^{2}\right),
\end{equation}
where $\lb \cdot,\cdot\rb$ denotes the scalar product between tensors, $\beta$ is a positive parameter and $\boldsymbol{{\cal A}}_{1},\ldots\boldsymbol{{\cal A}}_{N} \in \R^{p_1\times \cdots \times p_N}$ are the set of Lagrange multipliers associated with the constraints in problem (\ref{eq:Main2}).

ADMM is based on the following iterative scheme
\begin{eqnarray}
\boldsymbol{{\cal W}}^{\left[i+1\right]} & \leftarrow &\underset{\boldsymbol{{\cal W}}}{{\rm argmin}}~\mathcal{L}\left(\boldsymbol{{\cal W}},\boldsymbol{{\cal B}}^{[i]},\boldsymbol{{\cal A}}^{[i]}\right)\label{eq:ADMM1} \\
\boldsymbol{{\cal B}}_{n}^{\left[i+1\right]} & \leftarrow &\underset{\boldsymbol{{\cal B}}_{n}}{{\rm argmin}}~\mathcal{L}\left(\boldsymbol{{\cal W}}^{\left[i+1\right]},\boldsymbol{{\cal B}},\boldsymbol{{\cal A}}^{[i]}\right)\label{eq:ADMM2} \\
\boldsymbol{{\cal A}}_{n}^{\left[i+1\right]} & \leftarrow & \boldsymbol{{\cal A}}_{n}^{\left[i\right]}-\left(\beta\boldsymbol{{\cal W}}^{\left[i+1\right]}-\boldsymbol{{\cal B}}_{n}^{\left[i+1\right]}\right).\label{eq:ADMM3}
\end{eqnarray}

Step (\ref{eq:ADMM3}) is straightforward, whereas step (\ref{eq:ADMM1}) is described in \cite{Gandy}. 
Here we focus on the step (\ref{eq:ADMM2}) since this is the only problem which involves function
$\Psi$. We restate it with more explanatory notations as
$$
\underset{B_{n(n)}}{\rm argmin}
\left\{
\Psi\left(B_{n(n)}\right)-\left\langle A_{n(n)},W_{(n)}-B_{n(n)}\right\rangle 
+\frac{\beta}{2}\left\Vert W_{(n)}-B_{n(n)}\right\Vert _{2}^{2}
\right\}.
$$
By completing the square in the right hand side, the solution of this problem is given by
$$
\hat{B}_{n(n)}={\rm prox}_{\frac{1}{\beta}\Psi}
\left(X\right):= \underset{B_{n(n)}}{{\rm argmin}}
\left\{\frac{1}{\beta}\Psi\left(B_{n(n)}\right)+\frac{1}{2}\left\Vert B_{n(n)}-X\right\Vert _{2}^{2}\right\}
$$
where $X=W_{\left(n\right)}-\frac{1}{\beta}A_{n\left(n\right)}$.
By using properties of proximity operators (see e.g. \cite[Prop. 3.1]{andy-unpub}) we know that if
$\psi$ is a gauge function then
$$
{\rm prox}_{\frac{1}{\beta}\Psi}\left(X\right)=U_X {\rm diag\left({\rm prox}_{\frac{1}{\beta}\psi}\left(\sigma(X)\right)\right)}V_{X}^{\top}
$$
where $U_{X}$ and $V_{X}$ are the orthogonal matrices formed by the left and right singular vectors of $X$, respectively.

If we choose $\psi=\left\Vert \cdot\right\Vert _{1}$ the associated proximity operator is the well-known soft thresholding operator, that is, ${\rm prox}_{\frac{1}{\beta}\left\Vert \cdot\right\Vert _{1}}\left(\sigma\right)=v$, where the vector $v$ has components
$$
v_{i}={\rm sign}\left(\sigma_{i}\right)\left(\left|\sigma_{i}\right|-\frac{1}{\beta}\right).
$$
On the other hand, if we choose $\psi=\omega^{**}_\alpha$, we need to compute ${\rm prox}_{\frac{1}{\beta}\omega^{**}_\alpha}$.
In the next section, we describe a method to accomplish this task.

\subsection{Computation of the Proximity Operator}
In order to compute the proximity operator of the function ${\frac{1}{\beta}\omega_\alpha^{**}}$ we will use several properties of proximity calculus.
First, we use the formula (see e.g. \cite{Combettes}) ${\rm prox}_{g^{*}}\left(x\right)=x-{\rm prox}_{g}\left(x\right)$ for $g^{*}=\frac{1}{\beta}\omega_\alpha^{**}$. Next we use a property of conjugate functions from \cite{Shor,Urruty}, which states that $g(\cdot)=\frac{1}{\beta}\omega_\alpha^{*}(\beta \cdot)$. Finally, by the scaling property of proximity operators \cite{Combettes}, we have that ${\rm prox}_{g}\left(x\right)=\frac{1}{\beta}{\rm prox}_{\beta \omega_\alpha^{*}}\left(\beta x\right)$.

It remains to compute the proximity operator of a multiple of the function $\omega_\alpha^*$ in equation \eqref{eq:fstar}, that is, 
for any $\beta>0$, $y\in\mathcal{S}$, we wish to compute
$$
{\rm prox}_{\beta \omega_\alpha^{*}}\left(y\right)=\underset{w}{{\rm argmin}}\left\{ h\left(w\right) : w\in\mathcal{S}\right\} 
$$
where we have defined $\mathcal{S} := \{w \in \R^d: w_1 \geq \cdots \geq w_d \geq 0\}$ and 
$$
h\left(w\right)=\frac{1}{2}\left\Vert w-y\right\Vert _{2}^{2}+\beta~\underset{r=0}{\overset{d}{{\rm max}}}\left\{\alpha\left\Vert w_{1:r}\right\Vert _{2}-r\right\}.
$$
In order to solve this problem we employ the projected subgradient
method, see e.g. \cite{Subgradient}. It consists in applying two steps at each
iteration. First, it advances along a negative subgradient of the current solution; second, it projects the resultant point onto the feasible set $\mathcal{S}$. In fact, according to \cite{Subgradient}, it is sufficient to compute an approximate projection, a step which we describe in Appendix \ref{sec:projection}. To compute a subgradient of $h$ at $w$, we first
find any integer $\tilder$ such that ${\tilder}\in \underset{r=0}{\overset{d}{{\rm argmax}}}\left\{\alpha\left\Vert w_{1:r}\right\Vert _{2}-r\right\}$.
Then, we calculate a subgradient $\subgrad$ of the function $h$ at $w$ by the formula

\begin{equation*}
\subgrad_i = \left\{
    \begin{array}{ll}
      \left(1+\frac{\alpha \beta}{\left\Vert w_{1:{\tilder}}\right\Vert _{2}}\right)w_i-y_i, & \text{if } i \leq \tilder,\\
       w_i - y_i,  & \text{otherwise.}
\end{array} \right.
\end{equation*}
Now we have all the ingredients to apply the projected subgradient method, which is summarized in Algorithm \ref{alg:1}. 
In our implementation we stop the algorithm when an update of ${\hat w}$ is not made for more than $10^{3}$ iterations.

\begin{algorithm}[t]
\begin{algorithmic}
\STATE \textbf{Input}: $y\in \mathbb{R}^d$, $\alpha,\beta>0$.
\STATE \textbf{Output}: $\hat{w}\in\mathbb{R}^d$.
\STATE \textbf{Initialization}: initial step $\tau_{0}=\frac{1}{2}$, initial and best found solution $w^0=\hat{w}=P_{{\cal S}}(y) \in \R^d$.
\FOR {$t=1,2,\dots$}
\STATE $\tau\leftarrow\frac{\tau_{0}}{\sqrt{t}}$
\STATE Find $\tilder$ such that $\tilder\in {\rm argmax}\left\{\alpha\| w^{t-1}_{1:r}\|_{2}-r: 0 \leq r \leq d\right\}$
\STATE $\tilde{w}_{1:\tilder} \leftarrow w^{t-1}_{1:\tilder}-\tau \left(w^{t-1}_{1:\tilder}\left(1+\frac{\alpha\beta}{\left\Vert w^{t-1}_{1:\tilder}\right\Vert _{2}}\right)-y_{1:\tilder}\right)$
\STATE $\tilde{w}_{\tilder+1:d}\leftarrow w^{t-1}_{\tilder+1:d}-\tau\left(w^{t-1}_{\tilder+1:d}-y_{\tilder+1:d}\right)$
\STATE $w_t\leftarrow \aP_{\mathcal{S}}\left(\tilde{w}\right)$
\STATE If~ $h(w_t)<h(\hat{w})$~ then $\hat{w}\leftarrow w_t$
\STATE If~ ``Stopping Condition = True''~ then terminate.

\ENDFOR
\end{algorithmic}
\caption{Computation of ${\rm prox}_{\beta \omega_\alpha^{*}}(y)$}
\label{alg:1}
\end{algorithm}

\section{Experiments}
\label{sec:exp}
We have conducted a set of experiments to assess whether there is
any advantage of using the proposed regularizer over the tensor trace norm for
tensor completion. First, we have designed a synthetic experiment
to evaluate the performance of both approaches under controlled conditions.
Then, we have tried both methods on two tensor completion real data problems.
In all cases, we have used a validation procedure to tune the hyper-parameter
$\gamma$, present in both approaches, among the values $\left\{ 10^{j}\,:\, j= -7,-6,\ldots,0 \right\}$.
In our proposed approach there is one further hyper-parameter, $\alpha$,
to be specified. It should take the value of the Frobenius norm of
any matricization of the underlying tensor. Since this is unknown, we propose to use the estimate

$$\hat{\alpha}=\sqrt{\left\Vert w\right\Vert _{2}^{2}+\left({\rm mean}(w)^{2}+{\rm var}(w)\right)\left(\underset{i=1}{\overset{N}{\prod}}p_{i}-m\right)}\,,$$

where $m$ if the number of known entries and $w\in\mathbb{R}^{m}$
contains their values. This estimator assumes that each value in $w$ is sampled from $\mathcal{N}({\rm mean}(w),{\rm var}(w))$,
where ${\rm mean}(w)$ and ${\rm var}(w)$ are the average and the variance
of the elements in $w$.

\subsection{Synthetic Dataset}
We have generated a $3$-order tensor $\bW^0\in\mathbb{R}^{40\times20\times10}$ by the following procedure. 
First we generated a tensor $\bW$ with ranks $\left(12,6,3\right)$ using Tucker decomposition (see e.g. \cite{Kolda2009})
$${\W}_{i_{1},i_{2},i_{3}}=
\sum_{j_1=1}^{12} \sum_{j_2=1}^6\sum_{j_3=1}^3
{\cal C}_{j_{1},j_{2},j_{3}}M_{i_{1},j_{1}}^{(1)}M_{i_{2},j_{2}}^{(2)}M_{i_{3},j_{3}}^{(3)},~~~(i_{1},i_{2},i_{3}) \in [40]\times[20]\times[10]
$$
where each entry of the Tucker decomposition components is sampled from
the standard Gaussian distribution $\mathcal{N}(0,1)$. We then created 
the ground truth tensor $\bW^0$ by the equation 
$$
\W^0_{i_1,i_2,i_3} = \frac{{\W}_{i_1,i_2,i_3} - {\rm mean}(\bW)}{{\rm std}(\bW)}+ \xi_{i_1,i_2,i_3}
$$
where ${\rm mean}(\bW)$ and ${\rm std}(\bW)$ are the mean and standard deviation of the elements of ${\bW}$ and the $\xi_{i_1,i_2,i_3}$ are i.i.d. Gaussian random variables with zero mean and variance $\sigma^2$.
We have randomly sampled $10\%$ of the elements of the tensor to compose the training
set, $45\%$ for the validation set, and the remaining $45\%$ for
the test set. After repeating this process $20$ times, we report the
average results in Figure \ref{fig:SynthResults} (Left). Having conducted a paired $t$-test for each value of $\sigma^{2}$,
we conclude that the visible differences in the performances are highly
significant, obtaining always $p$-values less than $0.01$ for $\sigma^{2}\leq10^{-2}$.

\begin{figure}
\begin{centering}
\includegraphics[width=0.49\columnwidth]{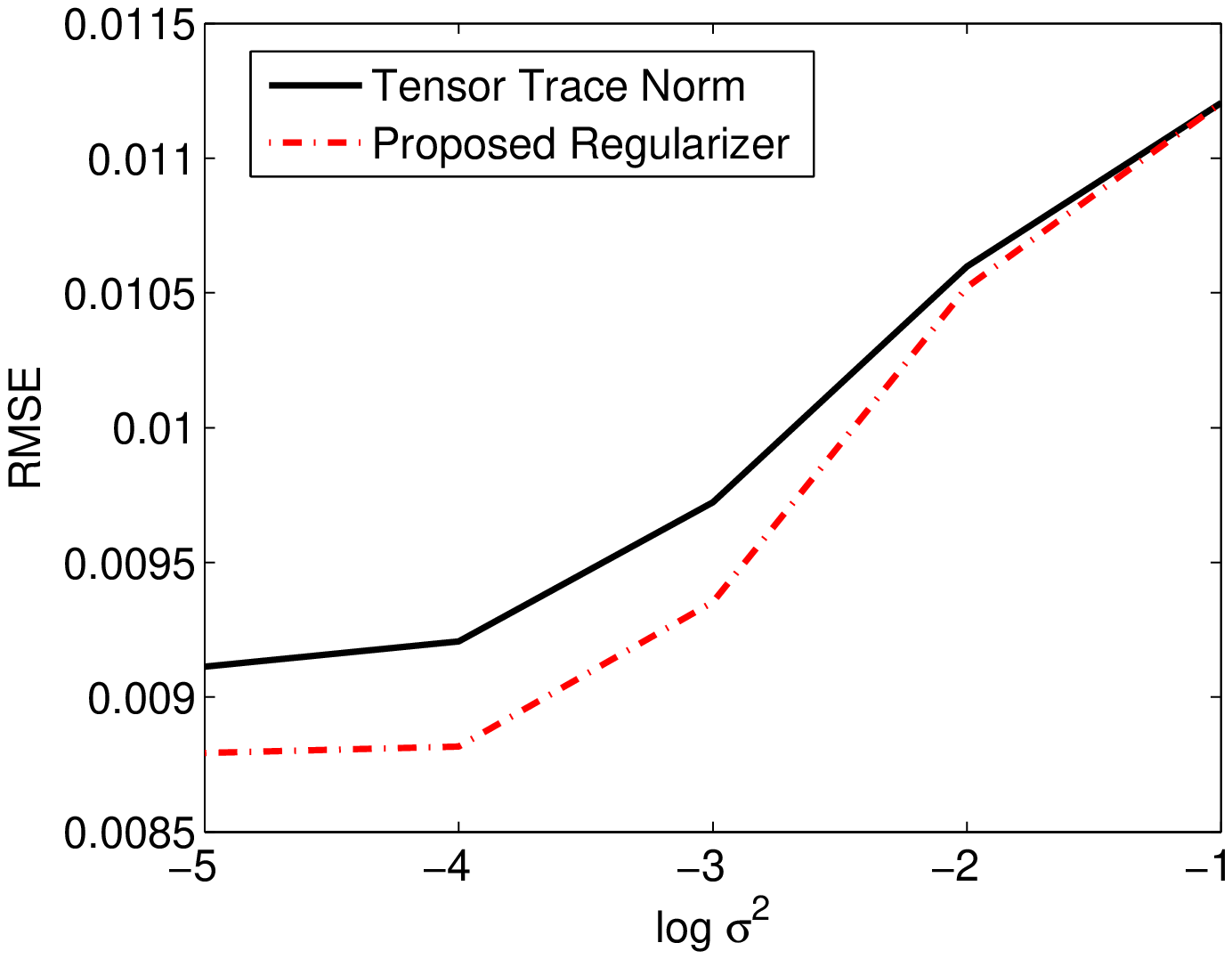}
\includegraphics[width=0.49\columnwidth]{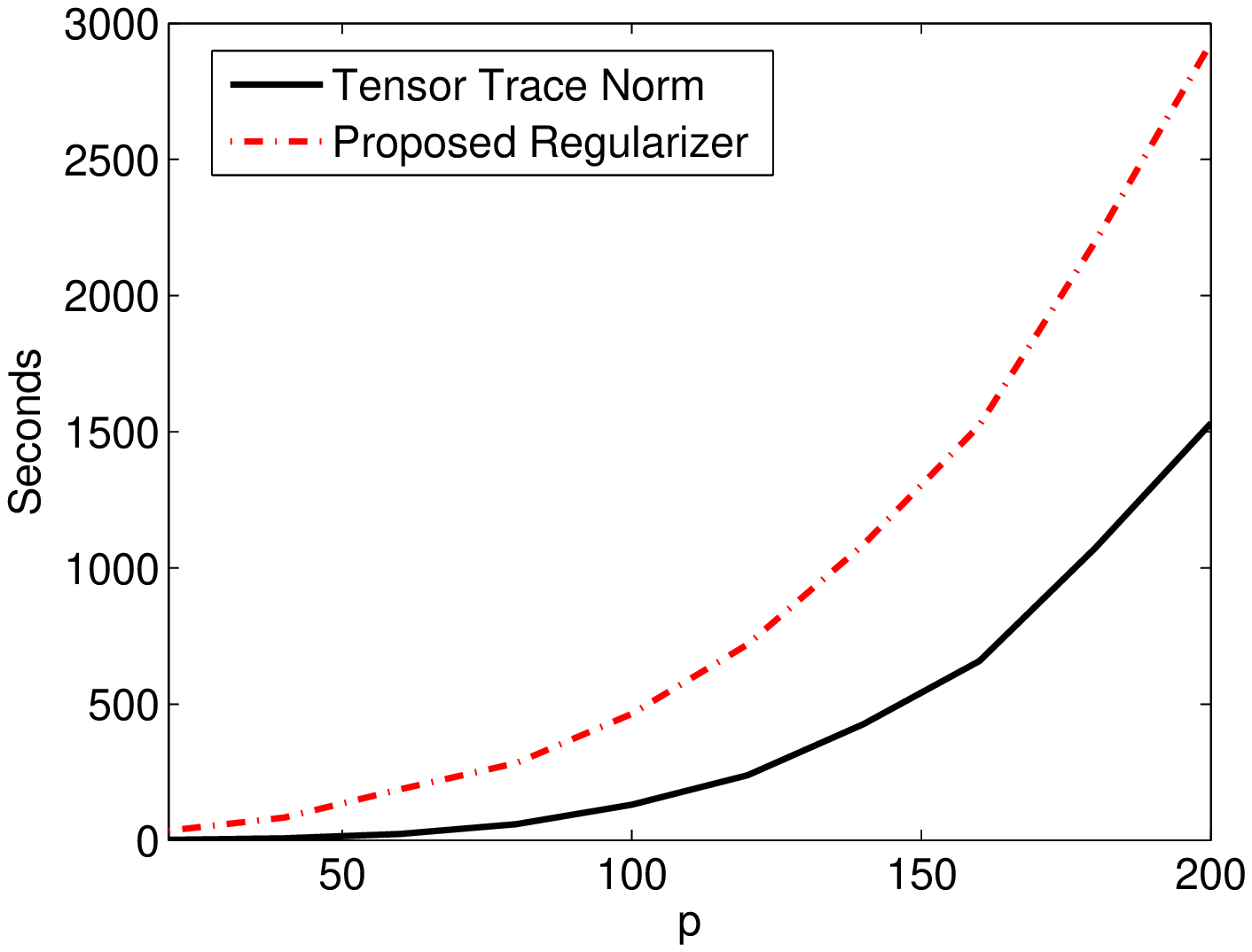}
\par\end{centering}

\caption{\label{fig:SynthResults} Synthetic dataset: (Left) Root Mean Squared Error
(RMSE) of tensor trace norm and the proposed regularizer. (Right)  Running time execution
of both algorithms for different sizes of the tensor.}
\end{figure}

Furthermore, we have conducted an experiment to test the running time
of both approaches. 
We have generated tensors $\boldsymbol{{\cal W}}^0\in\mathbb{R}^{p\times p\times p}$
for different values of $p \in \{ 20,\,40,\,\ldots,200\}$, following the same procedure as outlined above. The results are reported in
Figure \ref{fig:SynthResults} (Right). For low values of $p$, the ratio between the running time of our approach
and that of trace norm regularization is quite high. For example in the lowest value
tried for $p$ in this experiment, $p=20$, this ratio is $22.661$.
However, as the volume of the tensor increases, the ratio quickly decreases. For example, for $p=200$, the running time ratio is
$1.9113$. These outcomes are expected since when $p$ is low, the
most demanding routine in our method is the one described in Algorithm
\ref{alg:1}, where each iteration is of order $O\left(p\right)$
and $O\left(p^{2}\right)$ in the best and worst case, respectively.
However, as $p$ increases the singular value decomposition routine,
which is common to both methods, becomes the most demanding because
it has a time complexity $O\left(p^{3}\right)$ \cite{Golub}. Therefore,
we can conclude that even though our approach is slower than the trace
norm based method, this difference becomes much smaller as the size of the tensor increases.

\subsection{School Dataset}
The first real dataset we have tried is the Inner London Education
Authority (ILEA) dataset\footnote{Available at http://www.bristol.ac.uk/cmm/learning/support/datasets/ilea567.zip.}
. 
It is composed of examination marks ranging from $0$ to $70$,
of $15362$ students which are described by a set of attributes such
as school and ethnic group. Most of these attributes are categorical, thereby
we can think of exam mark prediction as a tensor completion problem
where each of the modes corresponds to a categorical attribute. In
particular, we have used the following attributes: school ($139$),
gender ($2$), VR-band ($3$), ethnic ($11$), and year ($3$), leading
to a $5$-order tensor $\boldsymbol{{\cal W}}\in\mathbb{R}^{139\times2\times3\times11\times3}$.

We have selected randomly $5\%$ of the instances to make the test
set and another $5\%$ of the instances for the validation set. From
the remaining instances, we have randomly chosen $m$ of them for
several values of $m$. This procedure has been repeated $20$ times
and the average performance is presented in Figure \ref{fig:RealResults} (Left). 
There is a distinguishable improvement of our approach with respect
to tensor trace norm regularization. To check whether this gap is significant, we have
conducted a set of paired $t$-tests for each value of $m$. In
all cases we obtained a $p$-value below $0.01$.

\begin{figure}
\begin{centering}
\includegraphics[width=0.49\columnwidth]{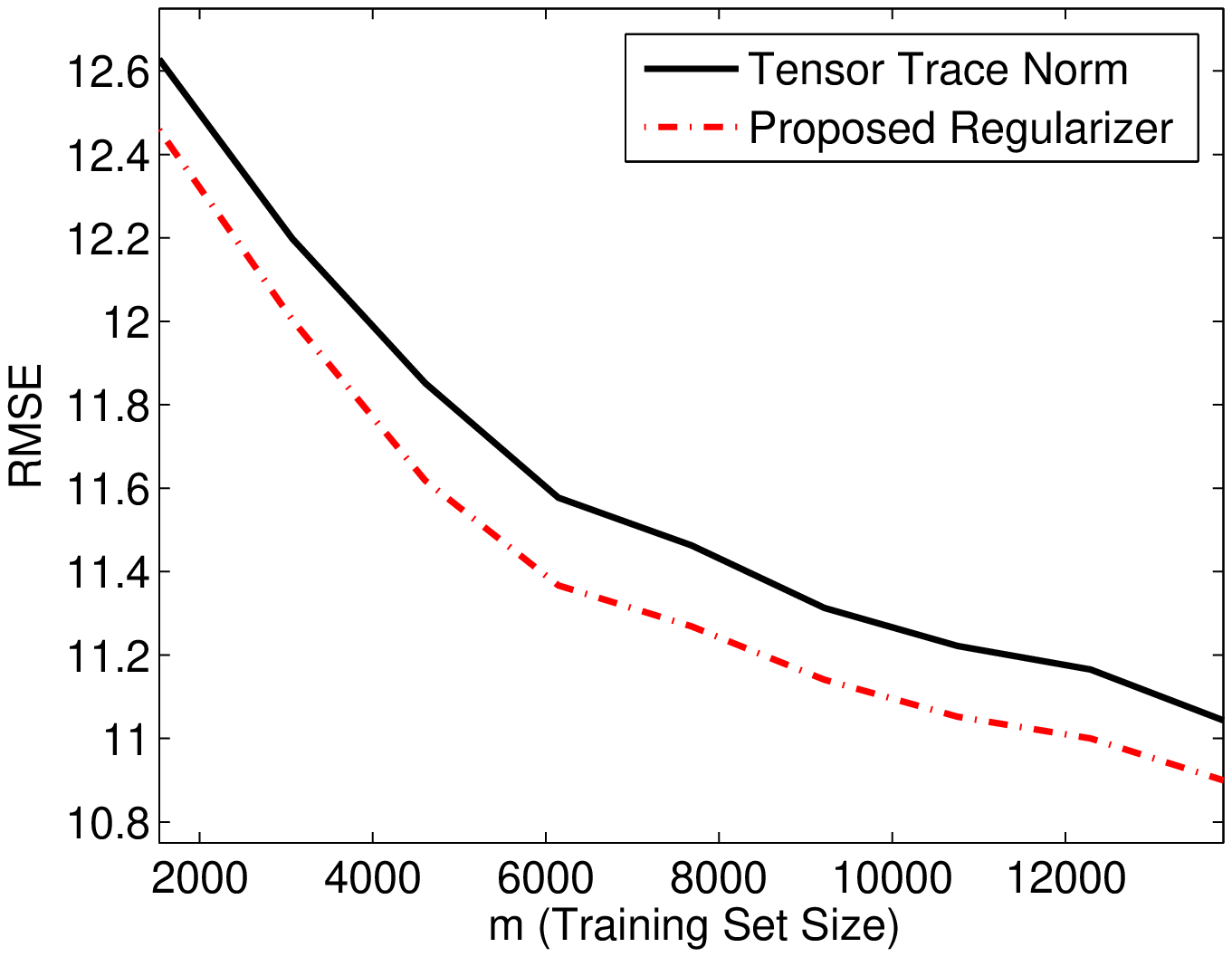}
\includegraphics[width=0.49\columnwidth]{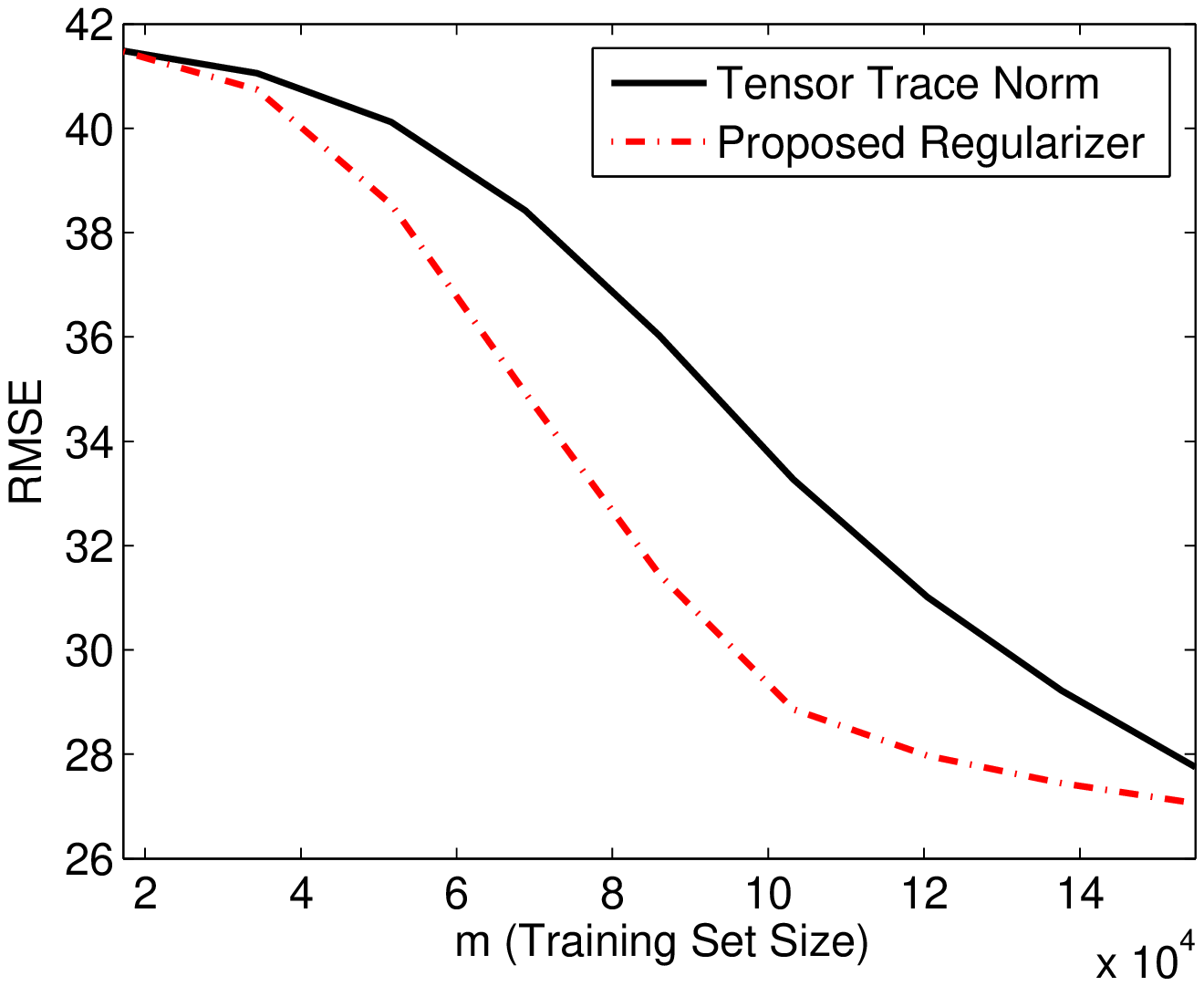}
\par\end{centering}

\caption{\label{fig:RealResults} Root Mean Squared Error of tensor trace norm and the proposed regularizer for ILEA dataset (Left) and Ocean video (Right).}
\end{figure}

\subsection{Video Completion}

In the second real-data experiment we have performed a video completion test.
Any video can be treated as a $4$-order tensor: ``width'' $\times$ ``height''
$\times$ ``RGB'' $\times$ ``video length'', so we can use tensor completion
algorithms to rebuild a video from a few inputs, a procedure that
can be useful for compression purposes. In our case, we have used
the Ocean video, available at \cite{Liu}. This video sequence can be
treated as a tensor $\boldsymbol{{\cal W}}\in\mathbb{R}^{160\times112\times3\times32}$.
We have randomly sampled $m$ tensors elements as training data, $5\%$ of them
as validation data, and the remaining ones composed the test set. After
repeating this procedure $10$ times, we present the average results
in Figure \ref{fig:RealResults} (Right). The proposed approach is noticeably better than the tensor trace norm
in this experiment. This apparent outcome is strongly supported by
the paired $t$-tests which we run for each value of $m$, obtaining
always $p$-values below $0.01$, and for the cases $m>5\times10^{4}$,
we obtained $p$-values below $10^{-6}$. 

\section{Conclusion}
\label{sec:conclusion}
In this paper, we proposed a convex relaxation for the average of the rank of the matricizations of a tensor. 
We compared this relaxation to a commonly used convex relaxation used in the context of tensor completion, which is based on the trace norm. We proved that this second relaxation is not tight and argued that the proposed convex regularizer may be advantageous. Empirical comparisons indicate that our method consistently improves in terms of estimation error over tensor trace norm regularization, while being computationally comparable on the range of problems we considered. In the future it would be interesting to study methods to speed up the computation of the proximity operator of our regularizer and investigate its utility in tensor learning problem beyond tensor completion such as multilinear multitask learning \cite{Bern}.

\section*{Appendix}
\appendix
In this appendix, we describe an auxiliary result and present the main steps for the computation of an approximate projection.

\section{A Useful Lemma}
\label{sec:useful_Lemma}

\begin{lemma}
Let $\C_1,\dots,\C_N$ be convex subsets of a Euclidean space and let ${\cal D}=\bigcap_{n=1}^N\C_n \neq \emptyset$. Let $g: \prod_{n=1}^N \C_n \rightarrow \R$ and let $h: {\cal D} \rightarrow \R$ be the function defined, for every $x \in {\cal D}$, as $h(x) = g(x,\dots,x)$. 
Then, for every $x \in {\cal D}$, it holds that
$$
h^{**}(x) \geq g^{**}(x_1,\dots,x_N)\left|_{x_n=x,~\forall n\in [N]}\right..
$$
\label{lem:1}
\end{lemma}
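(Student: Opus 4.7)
My plan is to exploit the standard characterization of the biconjugate $h^{\ast\ast}$ as the largest convex (lower semicontinuous) minorant of $h$, and simply exhibit $x \mapsto g^{\ast\ast}(x,\dots,x)$ as such a minorant.

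Concretely, I would proceed in three short steps. First, I would observe that, by the very definition of the biconjugate, $g^{\ast\ast}(y_1,\dots,y_N) \leq g(y_1,\dots,y_N)$ for every $(y_1,\dots,y_N) \in \prod_{n=1}^N \C_n$. Specializing to the diagonal $y_1=\cdots=y_N=x$ for $x \in {\cal D}$ gives $g^{\ast\ast}(x,\dots,x) \leq g(x,\dots,x) = h(x)$, so the map $x \mapsto g^{\ast\ast}(x,\dots,x)$ is a pointwise lower bound for $h$ on ${\cal D}$.

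Second, I would argue that this pointwise lower bound is itself convex (and lower semicontinuous). The function $g^{\ast\ast}$ is convex and lower semicontinuous on $\prod_{n=1}^N \C_n$ by general properties of Fenchel biconjugation, and the diagonal embedding $\Delta: x \mapsto (x,\dots,x)$ from ${\cal D}$ into $\prod_{n=1}^N \C_n$ is affine. The composition $g^{\ast\ast} \circ \Delta$ of a convex lsc function with an affine map is therefore convex and lsc on ${\cal D}$.

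Third, I would invoke the extremal characterization of the biconjugate: $h^{\ast\ast}$ is the largest convex lower semicontinuous function dominated pointwise by $h$. Since $g^{\ast\ast} \circ \Delta$ is such a function by the first two steps, we immediately obtain $h^{\ast\ast}(x) \geq g^{\ast\ast}(x,\dots,x)$ for every $x \in {\cal D}$, which is exactly the claim.

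There is no real obstacle here; the statement is a bookkeeping consequence of convex duality. The only point that requires minimal care is making sure the regularity hypotheses implicit in ``$h^{\ast\ast}$ equals the convex envelope'' are invoked correctly, but since the argument only needs the weaker fact that $h^{\ast\ast}$ dominates every convex lsc minorant of $h$ (which holds unconditionally), no additional assumption on $g$ or $h$ beyond those stated in the lemma is necessary.
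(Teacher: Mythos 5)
Your proof is correct and follows essentially the same route as the paper: the paper's one-line argument (restricting the domain of $g$ to the diagonal can only increase the convex envelope) is exactly the monotonicity you make precise by exhibiting $x \mapsto g^{**}(x,\dots,x)$ as a convex lower semicontinuous minorant of $h$ and invoking the extremal characterization of $h^{**}$. Your version is simply a more careful writing-out of the same idea, including the correct observation that no extra regularity hypotheses are needed.
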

\begin{proof}
Since the restriction of $g$ on ${\cal D}^N \subseteq \prod_{n=1}^N \C_n$ equals to $h$, the convex envelope of $g$ when evaluated on the smaller set ${\cal D}^N$ cannot be larger than the convex envelope of $h$ on ${\cal  D}$.
\end{proof}
Using this result it is immediately possible to derive a convex lower bound for the function $R$ in equation \eqref{eq:rank}. Since the convex envelope of the rank function on the unit ball of the spectral norm is the trace norm, using Lemma \ref{lem:1} with $\C_n = \{\bW : \|W_{(n)}\|_\infty \leq 1\}$ 
and 
$$g(\bW_1,\dots,\bW_N) = \frac{1}{N}\sum_{n=1}^N \rank((W_n)_{(n)}),$$ we conclude that 
the convex envelope of the function $R$ on the set $\calG_{\Sp}$ is bounded from below by $\frac{1}{N}\sum_{n=1}^N \|W_{(n)}\|_{\rm tr}$. Likewise the convex envelope of $R$ on the set $\alpha \calG_{\Fr}$ is lower bounded by the function $\Omega_\alpha$ in equation \eqref{eq:our}.

\section{Computation of an Approximated Projection}
\label{sec:projection}

Here, we address the issue of computing an approximate Euclidean projection onto the set 
$$
\mathcal{S} = \{v \in \R^d: v_1 \geq \cdots \geq v_d \geq 0\}.
$$ 
That is, for every $v$, we shall find a point $\aP_{\mathcal{S}}(v) \in {\cal S}$ such that 
\begin{equation}
\left\Vert \aP_{\mathcal{S}}\left(v\right)-z\right\Vert _{2}\leq\left\Vert v-z\right\Vert _{2},\,\forall z\in\mathcal{S}\label{eq:approxProj}.
\end{equation}
As noted in \cite{Subgradient}, in order to build $\aP_{\mathcal{S}}$ such that this property holds true, it is useful to express the set of interest as the smallest one in a series of nested sets. In our problem, we can express $\mathcal{S}$ as

\begin{center}
$\mathcal{S}=\mathcal{S}_{d}\subseteq\mathcal{S}_{d-1}\subseteq\ldots\subseteq\mathcal{S}_{1}$,
\par\end{center}
where $\mathcal{S}_{i}:=\left\{ v\in\mathbb{R}^{d}\,:\, v_{1}\geq v_{2}\geq\ldots\geq v_{i}, v\geq0\right\} $.
This property allows us to sequentially compute an approximate projection on the set ${\cal S}$ using the formula
\beq
\aP_{\mathcal{S}}\left(v\right)=P_{\mathcal{S}_{d}}\left(P_{\mathcal{S}_{d-1}}\cdots\left(P_{\mathcal{S}_{1}}\left(v\right)\right)\right)
\label{eq:0000}
\eeq
where, for every close convex set ${\cal C}$, we let $P_{\cal C}$ be the associated projection operator. Indeed, following  \cite{Subgradient}, we can argue by induction on $i$ that $\aP_{\mathcal{S}}\left(v\right)$ verifies 
condition (\ref{eq:approxProj}). The base case is $\left\Vert P_{\mathcal{S}_{1}}\left(v\right)-z\right\Vert _{2}=\left\Vert v-z\right\Vert _{2}$,
which is obvious. Now, if for a given $1 \leq i \leq d-1$ it holds that
\begin{center}
$\left\Vert P_{\mathcal{S}_{i}}\left(\cdots P_{\mathcal{S}_{1}}\left(v\right)\right)-z\right\Vert _{2}\leq\left\Vert v-z\right\Vert _{2}$
\par\end{center}
then
\begin{center}
$\left\Vert P_{\mathcal{S}_{i+1}}\left(P_{\mathcal{S}_{i}}\left(\cdots P_{\mathcal{S}_{1}}\left(v\right)\right)\right)-z\right\Vert _{2}\leq\left\Vert P_{\mathcal{S}_{i}}\left(\cdots P_{\mathcal{S}_{1}}\left(v\right)\right)-z\right\Vert _{2}\leq\left\Vert v-z\right\Vert _{2}$, 
\par\end{center}
since $z$ is also contained in $\mathcal{S}_{i+1}$. 

Note that to evaluate the right hand side of equation \eqref{eq:0000} we do not require full knowledge of $P_{{\cal S}_i}$, we only need to compute $P_{\mathcal{S}_{i+1}}(v)$ for $v\in\mathcal{S}_{i}$. The next proposition describes a recursive formula to achieve this step.

\begin{proposition}
For any $v\in\mathcal{S}_{i}$, we express its first $i$ elements
as $v_{1:i}=\left[v_{1:i-j},\,v_i\boldsymbol{1}^{j}\right]$, where
the last $j \in[i]$ is the largest integer such that $v_{i-j+1}= v_{i-j+2} = \cdots =v_i$. It holds that
\begin{equation}
P_{\mathcal{S}_{i+1}}\hspace{-.1truecm}\left(v\right)=\begin{cases}
v & {\rm if}\,v_i\geq v_{i+1}\\
\left[v_{1:i-j},\,\left(v_i+\frac{v_{i+1}-v_i}{j+1}\right)\boldsymbol{1}^{j+1}, \, v_{i+2:d}\right] & {\rm if}\,v_i<v_{i+1}~{\rm and}~
 v_{i-j}\geq v_i\hspace{-.03truecm}+\hspace{-.03truecm}\frac{v_{i+1}-v_i}{j+1}\\
P_{\mathcal{S}_{i+1}}\hspace{-.1truecm}\left(\hspace{-.03truecm}\left[v_{1:i-j},\, v_{i-j}\boldsymbol{1}^{j},\, v_{i+1}\hspace{-.03truecm}-\left(v_{i-j}\hspace{-.03truecm}-\hspace{-.03truecm}v_i\right)j, \, v_{i+2:d}\right]\hspace{-.03truecm}\right) & {\rm otherwise},
\nonumber
\end{cases}
\end{equation}
where $\boldsymbol{1}^d\in \mathbb{R}^d$ denotes the vector containing $1$ in all its elements.
\label{prop:projS}
\end{proposition}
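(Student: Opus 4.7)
My plan is to view the projection as one-dimensional monotone (isotonic) regression and analyse it via the Pool-Adjacent-Violators (PAV) algorithm. Because $v \in \mathcal{S}_i$ already satisfies $v_1 \geq \cdots \geq v_i$ and $v \geq 0$, the projection onto $\mathcal{S}_{i+1}$ leaves coordinates $i+2, \ldots, d$ untouched, so we reduce to
\begin{equation*}
\min_{w \in \R^{i+1}} \|w - v_{1:i+1}\|_2^2 \quad \text{subject to } w_1 \geq \cdots \geq w_{i+1} \geq 0,
\end{equation*}
whose unique minimizer I call $w^\star$. Case 1 is immediate: if $v_i \geq v_{i+1}$, then $v_{1:i+1}$ is already feasible, so $w^\star = v_{1:i+1}$.

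Assume from now on that $v_i < v_{i+1}$. Running PAV on $v_{1:i+1}$ from the right, the initial violation at $(i, i+1)$ forces a merge; since $v_{i-j+1} = \cdots = v_i$, each subsequent merge against a coordinate equal to $v_i$ still violates monotonicity, because the running average of any pool containing $v_{i+1}$ strictly exceeds $v_i$. After absorbing the entire terminal block of $v_{1:i}$, we obtain a pool on positions $\{i-j+1,\ldots,i+1\}$ of common value
\begin{equation*}
c \;=\; \frac{j v_i + v_{i+1}}{j+1} \;=\; v_i + \frac{v_{i+1} - v_i}{j+1}.
\end{equation*}
If $v_{i-j} \geq c$ (Case 2), PAV terminates and $[v_{1:i-j}, c \mathbf{1}^{j+1}, v_{i+2:d}]$ is feasible and optimal. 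Otherwise (Case 3) the pool must grow at least one more step.

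The main technical content of Case 3 is to justify the recursion by showing $P_{\mathcal{S}_{i+1}}(v) = P_{\mathcal{S}_{i+1}}(v')$, where $v' = [v_{1:i-j},\, v_{i-j}\mathbf{1}^j,\, v_{i+1} - j(v_{i-j}-v_i),\, v_{i+2:d}]$. The driving identity is that $v$ and $v'$ agree outside positions $i-j,\ldots,i+1$ and
\begin{equation*}
\sum_{k=i-j}^{i+1} v'_k \;=\; (j+1)v_{i-j} + v_{i+1} - j(v_{i-j} - v_i) \;=\; v_{i-j} + j v_i + v_{i+1} \;=\; \sum_{k=i-j}^{i+1} v_k.
\end{equation*}
On $v$, the next PAV merge absorbs $v_{i-j}$, yielding a pool on $\{i-j,\ldots,i+1\}$ of length $j+2$ with common value $c' = (v_{i-j} + j v_i + v_{i+1})/(j+2)$. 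On $v'$, the terminal block of equal values in $v'_{1:i}$ already has length $j+1$, so the first PAV sweep directly produces the same pool $\{i-j,\ldots,i+1\}$ with the same average $c'$ by the identity above. After these respective initial phases the PAV working sequences of $v$ and $v'$ coincide coordinatewise, so all remaining merges proceed identically and produce the same projection. Since the length of the terminal equal-valued block strictly increases with each recursion and is bounded by $i$, the process terminates in at most $i$ calls via Case 1 or Case 2.

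The main obstacle I foresee is rigorously justifying that PAV propagates the pool through all tied coordinates, i.e.\ that $w^\star_{i-j+1} = \cdots = w^\star_{i+1}$ whenever $v_{i-j+1}=\cdots=v_i < v_{i+1}$. This is standard in isotonic regression and can be obtained either from the KKT multipliers (the boundary multipliers on both ends of a pooled block vanish, forcing the block value to equal the block average) or from a symmetrization argument using strict convexity of the objective under permutations of tied-value coordinates; it is the only nontrivial bookkeeping step in the proof.
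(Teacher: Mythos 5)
Your argument is correct, but it takes a genuinely different route from the paper. The paper works throughout with the obtuse-angle characterization of projections, $x=P_{\mathcal{C}}(y)\Leftrightarrow\langle y-x,\,z-x\rangle\leq 0$ for all $z\in\mathcal{C}$: Case~2 is settled by computing this inner product explicitly for the candidate point and showing it equals $\frac{v_{i+1}-v_i}{j+1}\left(jz_{i+1}-\Vert z_{i-j+1:i}\Vert_1\right)\leq 0$, and Case~3 is handled by comparing the inner products for $v$ and for the modified vector $v'$, which collapses to a one-line sign check once one uses that the projection is constant on the tied block, $x_{i-j+1}=\cdots=x_{i+1}$. You instead recognize the problem as isotonic regression on the first $i+1$ coordinates and track the pool-adjacent-violators algorithm; the three cases then correspond to whether the pool containing $v_{i+1}$ is empty, stops at position $i-j+1$, or must absorb $v_{i-j}$, and the Case~3 recursion is justified by the sum-preserving identity showing that the PAV states for $v$ and $v'$ coincide after their initial merges. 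What the paper's route buys is self-containedness (only the variational inequality is needed, no appeal to the correctness and schedule-invariance of PAV); what yours buys is a transparent derivation of where the constants $v_i+\frac{v_{i+1}-v_i}{j+1}$ and $v_{i+1}-j(v_{i-j}-v_i)$ come from. Note that the ``main obstacle'' you flag --- that the optimum pools all tied coordinates --- is not circumvented by the paper either: its Case~3 argument invokes exactly the assertion $x_{i-j+1}=\cdots=x_{i+1}$ without proof, so supplying the KKT (or symmetrization) argument you sketch would put you on at least equal footing. Two details to tighten: (i) state why the nonnegativity constraint is inactive (every PAV value is an average of nonnegative entries, so unconstrained isotonic regression already lands in $\mathcal{S}_{i+1}$); (ii) when you say the two working sequences ``coincide coordinatewise,'' also record that their pool partitions coincide (a single pool $\{i-j,\dots,i+1\}$ plus singletons in both cases), since later merge averages are weighted by pool sizes and equality of values alone would not suffice in general.
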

\begin{proof}
The first case is straightforward. In the following we prove the
remaining two. In both cases it will be useful to recall that the
projection operator $P_{\mathcal{C}}$ on any convex set $\mathcal{C}$
is characterized as 

\begin{equation}
x=P_{\mathcal{C}}\left(y\right)\Longleftrightarrow\left\langle y-x,\, z-x\right\rangle \leq0, ~ \forall z\in\mathcal{C}.\label{eq:proOperatorProp}
\end{equation}

\begin{algorithm}[t]
\begin{algorithmic}
\STATE \textbf{Input}: $y\in \mathbb{R}_+^d$.
\STATE \textbf{Output}: $v \in \mathcal{S}$.
\STATE \textbf{Initialization}: $v\leftarrow y$.

\FOR {$i=1,2,\dots, d$}
\WHILE {$v_{i}<v_{i+1}$}

\STATE $j \leftarrow$ ${\rm argmax} \{\ell:  \ell \in [i], v_{i}=v_{i-\ell+1}\}$ 

\IF {$v_i+\frac{v_{i+1}-v_i}{j+1}$}
\STATE $v_{1:i+1} \leftarrow \left[v_{1:i-j},\,\left(v_i+\frac{v_{i+1}-v_i}{j+1}\right)\boldsymbol{1}^{j+1}\right] $
\ELSE
\STATE $v_{1:i+1} \leftarrow \left[v_{1:i-j},\, v_{i-j}\boldsymbol{1}^{j},\, v_{i+1}-\left(v_{i-j}-v_i\right)j\right]$
\ENDIF

\ENDWHILE
\ENDFOR
\end{algorithmic}
\caption{Computing an approximated projection 
onto the set $\mathcal{S} = \{v \in \R^d: v_1 \geq \cdots \geq v_d \geq 0\}$.
}
\label{alg:proj}
\end{algorithm}

To prove the second case, we use property (\ref{eq:proOperatorProp}) and apply simple
algebraic transformations to obtain, for all $z\in\mathcal{S}_{i+1}$, that
$$
\left\langle v-P_{\mathcal{S}_{i+1}}\left(v\right),\, z-P_{\mathcal{S}_{i+1}}\left(v\right)\right\rangle = 
\frac{v_{i+1}-v_i}{j+1}\left(jz_{i+1}-\left\Vert z_{i-j+1:i}\right\Vert _{1}\right)\leq0.
$$

Finally we prove the third case. We want to show that if  $x=P_{\mathcal{S}_{i+1}}(v)$ then
$$
x=P_{\mathcal{S}_{i+1}}\left(\left[v_{1:i-j},\, v_{i-j}\boldsymbol{1}^{j},\, v_{i+1}-\left(v_{i-j}-v_i\right)j, \, v_{i+2:d}\right]\right).
$$ 

By using property (\ref{eq:proOperatorProp}), the last equation is equivalent to the statement that if
\begin{equation}
\left\langle v-x,\, z-x\right\rangle \leq0,\,\,\forall z\in\mathcal{S}_{i+1}~~{\rm then}\label{eq:leftImpl}
\end{equation}
\begin{equation}
\left\langle \left[v_{1:i-j},\, v_{i-j}\boldsymbol{1}^{j},\, v_{i+1}-\left(v_{i-j}-v_i\right)j, \, v_{i+2:d}\right]-x,\, z-x\right\rangle \leq0,\,\,\forall z\in\mathcal{S}_{i+1}.
\label{eq:rightImpl}
\end{equation}

A way to show that it holds true is to prove that the term in the left hand side of (\ref{eq:rightImpl}) is upper bounded by the corresponding term  in (\ref{eq:leftImpl}). That is, for every $z\in\mathcal{S}_{i+1}$, we want to show that
$$
\left\langle \left[v_{1:i-j},\, v_{i-j}\boldsymbol{1}^{j},\, v_{i+1}-\left(v_{i-j}-v_i\right)j, \, v_{i+2:d}\right]-v,z-x\right\rangle \leq0.
$$ 
A direct computation yields the equivalent inequality
\begin{equation}
\left(v_{i-j}-v_i\right)\left(jx_{i+1}-\left\Vert x_{i-j+1:i}\right\Vert _{1}+\left\Vert z_{i-j+1:i}\right\Vert _{1}-jz_{i+1}\right)\leq 0.
\label{eq:appProj1}
\end{equation}

Since $x=P_{\mathcal{S}_{i+1}}\left(v\right)$, $v_{i-j+1} = v_{i-j+2} = \cdots = v_i$ and $v_{i+1}>v_i$, then $x_{i-j+1}=x_{i-j+2}=\cdots=x_{i+1}$. Consequently, the left hand side of inequality (\ref{eq:appProj1}) is equivalent to
$$
\left(v_{i-j}-v_i\right)\left(\left\Vert z_{i-j+1:i}\right\Vert _{1}-jz_{i+1}\right)\leq 0.
$$
Note that the first factor is negative and the second is positive because $z$ and $v$ are in $\mathcal{S}_{i+1}$. The result follows.
\end{proof}

Algorithm \ref{alg:proj} summarizes our method to compute the approximated projection operator onto the set $\mathcal{S}$, based on Proposition \ref{prop:projS}.

\end{document}